\documentclass[11pt]{article}

\usepackage[margin=1.2in]{geometry}
\usepackage{amsmath,amssymb,amsthm}
\usepackage{mathtools}
\usepackage{tikz}
\usetikzlibrary{shapes.geometric,arrows.meta,positioning,fit,backgrounds,
                calc,decorations.pathreplacing,matrix,shadows}
\usepackage{pgfplots}
\pgfplotsset{compat=1.18}
\usepackage{listings}
\usepackage{xcolor}
\usepackage{booktabs}
\usepackage{microtype}
\usepackage{hyperref}
\usepackage{cleveref}
\usepackage{enumitem}
\usepackage{caption}
\usepackage{subcaption}
\usepackage{cite}
\usepackage{graphicx}
\usepackage{float}

\newtheorem{definition}{Definition}
\newtheorem{theorem}{Theorem}

\lstdefinelanguage{RouterDSL}{
  morekeywords={SIGNAL,ROUTE,PLUGIN,BACKEND,GLOBAL,SIGNAL_GROUP,
                PRIORITY,WHEN,MODEL,ALGORITHM,AND,OR,NOT,
                IF,ELSE,DECISION_TREE,TIER,TEST,PARTITION},
  sensitive=true,
  morecomment=[l]{\#},
  morestring=[b]",
}
\newcommand{\dsl}{\textsc{ProbPol}}
\newcommand{\eg}{\emph{e.g.},\ }

\newcommand{\etal}{\emph{et al.}\ }

\DeclareMathOperator{\softmax}{softmax}

\title{\textbf{Conflict-Free Policy Languages for\\
       Probabilistic ML Predicates}\\[4pt]
       {\large A Framework and Case Study with the Semantic Router DSL}}

\author{%
  Xunzhuo Liu$^{1}$ \quad
  Hao Wu$^{1}$ \quad
  Huamin Chen$^{1}$ \\[4pt]
  Bowei He$^{2}$\thanks{Corresponding author: \texttt{Bowei.He@mbzuai.ac.ae}} \quad
  Xue Liu$^{2}$
  \\[6pt]
  $^{1}$vLLM Semantic Router Project \\[1pt]
  $^{2}$MBZUAI / McGill University
}

\date{}

\begin{document}
\maketitle

\begin{abstract}
Conflict detection in policy languages is a solved problem---as long as
every rule condition is a crisp Boolean predicate.
BDDs, SMT solvers, and NetKAT all exploit that assumption.
But a growing class of routing and access-control systems base their
decisions on \emph{probabilistic ML signals}: embedding similarities,
domain classifiers, complexity estimators.
Two such signals, declared over categories the author intended to be
disjoint, can both clear their thresholds on the same query and
silently route it to the wrong model.
Nothing in the compiler warns about this.

We characterize the problem as a three-level decidability hierarchy---crisp
conflicts are decidable via SAT, embedding conflicts reduce to
spherical cap intersection, and classifier conflicts are undecidable
without distributional knowledge---and show that for the embedding
case, which dominates in practice, replacing independent thresholding
with a temperature-scaled softmax partitions the embedding space into
Voronoi regions where co-firing is impossible.
No model retraining is needed.
We implement the detection and prevention mechanisms in the Semantic
Router DSL, a production routing language for LLM inference, and
discuss how the same ideas apply to semantic RBAC and API gateway
policy.
\end{abstract}

\section{Introduction}
\label{sec:intro}

Policy languages have long described routing and access-control
decisions as ordered lists of crisp rules---firewall ACLs, XACML policies,
BGP route-maps---where each condition is a deterministic Boolean over
a finite-domain attribute like an IP range or a user role.
A rich toolkit exploits this structure: BDD-based firewall
analyzers~\cite{al2004modeling,yuan2006fireman}, the equational
theory of NetKAT~\cite{anderson2014netkat}, and SMT-based XACML
analysis~\cite{turkmen2017formal}.

That assumption is breaking down.
LLM inference routers~\cite{ding2024hybrid,vllmrouter2025} fire rules
on embedding similarity scores and domain classifier outputs.
API gateways classify request bodies with NLP models before routing.
Kubernetes admission webhooks run semantic analysis on deployment manifests.
In every case, the rule condition is a continuous score in $[0,1]$
that gets thresholded at runtime---what we call a \emph{probabilistic
ML signal}.
When two such signals cover semantically neighboring categories, both
can clear their thresholds on the same query.
Priority breaks the tie regardless of which signal is more confident,
and nothing in the compiler says a word.

\Cref{fig:gap} positions the problem.
Classical conflict-detection tools need crisp predicates.
Pure ML routing systems optimize routing quality but have no formal
conflict semantics.
The intersection---policy languages whose atoms are probabilistic ML
signals---has gone unexamined.

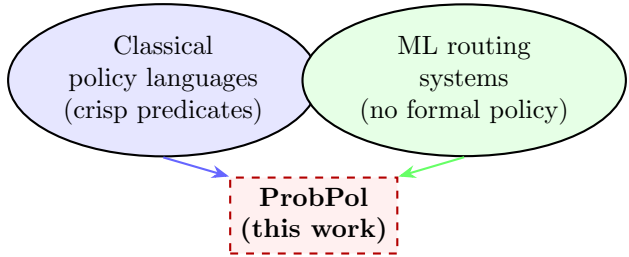
\begin{figure}[t]
\centering
\begin{tikzpicture}[
  blob/.style={ellipse, draw, thick, minimum width=3.0cm,
               minimum height=1.4cm, align=center, font=\small},
  gap/.style={rectangle, draw=red!70!black, dashed, thick,
              minimum width=2.2cm, minimum height=1.0cm,
              align=center, font=\small\bfseries},
  arr/.style={->, >=Stealth, thick},
]
\node[blob, fill=blue!10]  (cls) at (-2.0, 0)
    {Classical\\policy languages\\(crisp predicates)};
\node[blob, fill=green!10] (ml)  at ( 2.0, 0)
    {ML routing\\systems\\(no formal policy)};
\node[gap,  fill=red!6]    (gap) at ( 0.0,-1.8)
    {\dsl{}\\(this work)};
\draw[arr, blue!60]  (cls.south) -- (gap.north west);
\draw[arr, green!60] (ml.south)  -- (gap.north east);
\end{tikzpicture}
\caption{The gap addressed by this work.  Classical policy languages assume
crisp predicates; ML routing systems lack formal policy semantics.
\dsl{} occupies the intersection.}
\label{fig:gap}
\end{figure}

\paragraph{Contributions.}
We make four contributions.
First, we extend the classical firewall anomaly taxonomy with three
conflict types specific to probabilistic predicates---probable conflict,
soft shadowing, and calibration conflict---and arrange them in a
decidability hierarchy: crisp conflicts are decidable via SAT, embedding
conflicts reduce to spherical cap intersection, and classifier conflicts
require distributional knowledge that is generally unavailable
(\cref{sec:taxonomy,sec:decidability}).
Second, we show that for embedding signals---the most common case in
practice---replacing independent thresholding with a temperature-scaled
softmax partitions the input space into Voronoi regions, eliminating
co-firing without retraining the model (\cref{sec:voronoi}).
Third, we design and implement a suite of compiler-level checks---category
overlap detection, guard-warning diagnostics with auto-repair hints,
a \texttt{SIGNAL\_GROUP} construct for declaring mutual exclusion,
\texttt{TEST} blocks for empirical validation, and \texttt{TIER}
routing for multi-level evaluation---in the Semantic Router DSL, a
production routing language for LLM inference (\cref{sec:detection,%
sec:casestudy}).
Fourth, we show that the same conflict types and mitigations apply to
semantic RBAC and API gateway policy (\cref{sec:generalization}).

\section{Background and Motivation}
\label{sec:background}

\subsection{Classical Policy Conflict Detection}

A policy is an ordered list of rules, each pairing a Boolean condition
with an action; evaluation is first-match.

\paragraph{Firewall anomaly taxonomy.}
Al-Shaer and Hamed~\cite{al2004modeling} identify four anomaly types
for firewall rule sets: \emph{shadowing} (a higher-priority rule
subsumes a lower one, making it unreachable), \emph{redundancy} (two
rules have equivalent conditions), \emph{correlation} (conditions
overlap but neither subsumes the other), and \emph{generalization}
(a less-specific rule masks a more-specific one).
Because IP/port predicates range over finite domains, BDD-based
analysis detects all four~\cite{yuan2006fireman}.

\paragraph{Policy algebras.}
NetKAT~\cite{anderson2014netkat} gives network policy a sound and complete
equational theory grounded in Kleene Algebra with Tests.
Policy equivalence is decidable in PSPACE~\cite{smolka2015coalgebraic}.
Probabilistic NetKAT~\cite{foster2016probnetkat} extends the framework
to stochastic forwarding, but its \emph{test} atoms remain crisp Boolean
predicates over packet fields.

\paragraph{Access control.}
Turkmen \etal~\cite{turkmen2017formal} encode XACML policies as
SMT formulas, introducing numeric variables to handle non-Boolean
attributes such as user age or resource size.
Decidability holds as long as the attribute domains are finite.

\subsection{The Semantic Router DSL}

The Semantic Router DSL~\cite{vllmrouter2025} is a production routing
language for LLM inference.
Its five block types are \texttt{SIGNAL}, \texttt{ROUTE}, \texttt{PLUGIN},
\texttt{BACKEND}, and \texttt{GLOBAL}.
A \texttt{SIGNAL} block names an ML-based detector with its configuration;
a \texttt{ROUTE} block wires signals together in a Boolean \texttt{WHEN}
clause and maps the result to a model or plugin.

\begin{lstlisting}[caption={A representative DSL configuration.},
                   label=lst:example]
SIGNAL domain math {
  mmlu_categories: ["college_mathematics",
                    "abstract_algebra"]
}
SIGNAL domain science {
  mmlu_categories: ["college_physics",
                    "college_chemistry"]
}

ROUTE math_route {
  PRIORITY 200
  WHEN domain("math")
  MODEL "qwen2.5-math"
}
ROUTE science_route {
  PRIORITY 100
  WHEN domain("science")
  MODEL "qwen2.5-science"
}
\end{lstlisting}

The language ships 13 signal types, including \texttt{keyword},
\texttt{embedding}, \texttt{domain}, \texttt{complexity},
\texttt{jailbreak}, \texttt{pii}, and \texttt{authz}.
Every signal produces a score in $[0,1]$; crossing a declared threshold
turns the score into a Boolean activation.
Routes are checked from highest priority to lowest, and the first match
wins.

\subsection{The Conflict Problem}

Consider the configuration in \cref{lst:example} and the query
\emph{``What is the quantum tunneling probability through a potential barrier?''}
The domain classifier gives math a score of 0.52 and science a score of
0.89.  With a threshold of 0.5, both signals fire.
\texttt{math\_route} wins on priority even though the science signal is
far more confident---and nothing in the compiler flags the problem.
The existing validator checks syntax, reference resolution, and
field constraints, but has no notion of whether two signal conditions
can co-fire.

\section{The \dsl{} Framework}
\label{sec:framework}

A policy is an ordered list of rules evaluated first-match: each rule
has a Boolean condition over signal activations, an action, and a
priority, and the highest-priority rule whose condition holds wins.
A signal maps a query to a confidence score in $[0,1]$ and fires
when that score exceeds a threshold.

The critical observation is that not all signals are alike.
\textbf{Crisp} signals (keyword match, group membership, token count)
always return $0$ or $1$.
\textbf{Geometric} signals (embedding cosine similarity) return distances
in a metric space, where the activation region is a spherical cap.
\textbf{Classifier} signals (domain, complexity, jailbreak, PII)
return soft probabilities from a neural model, with decision boundaries
that depend on the model's training data.
This distinction matters: it determines which conflict types can be
detected statically and which cannot.

\subsection{Conflict Taxonomy}
\label{sec:taxonomy}

For two rules with different actions and different priorities,
we identify six anomaly types (\cref{fig:taxonomy}).
The first three arise with crisp signals and have known static
detectors; the last three are specific to probabilistic predicates.

\begin{enumerate}[leftmargin=*, nosep]
  \item \textbf{Logical contradiction}: the condition is unsatisfiable
    (\eg \texttt{domain("math") AND NOT domain("math")}),
    so the rule never fires.
  \item \textbf{Structural shadowing}: one condition implies the other,
    permanently masking the lower-priority rule.
  \item \textbf{Structural redundancy}: two conditions are equivalent;
    the lower-priority rule is unreachable.
  \item \textbf{Probable conflict}: both signals co-fire on a
    non-trivial fraction of real inputs.
  \item \textbf{Soft shadowing}: the higher-priority rule dominates
    most of the time, so priority resolves in its favor even when
    the other signal is far more confident---routing against the
    evidence.
  \item \textbf{Calibration conflict}: the category sets are
    structurally disjoint, yet the classifier activates both near
    semantic boundaries---for instance, a physics query fires both
    \texttt{math} and \texttt{science}.
\end{enumerate}

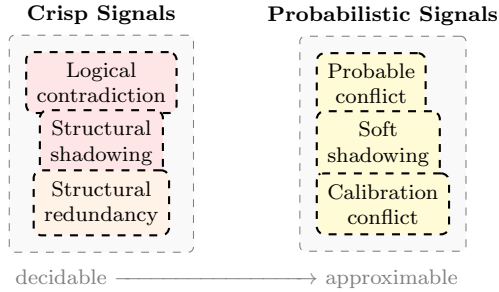
\begin{figure}[t]
\centering
\begin{tikzpicture}[
  box/.style={rectangle, draw, rounded corners=3pt, thick,
              minimum height=0.65cm, align=center, inner sep=4pt,
              font=\scriptsize},
  cat/.style={rectangle, draw=gray, rounded corners=2pt, dashed,
              fill=gray!5, inner sep=6pt},
  arr/.style={->, >=Stealth, thick},
]
\node[cat, label={[font=\scriptsize\bfseries]above:Crisp Signals}]
  (cgrp) at (-1.8, 0) {
    \begin{tikzpicture}
      \node[box, fill=red!10]    (lc)  at (0, 0.7)  {Logical\\contradiction};
      \node[box, fill=red!10]    (ss)  at (0,-0.1)  {Structural\\shadowing};
      \node[box, fill=orange!10] (sr)  at (0,-0.9)  {Structural\\redundancy};
    \end{tikzpicture}
  };
\node[cat, label={[font=\scriptsize\bfseries]above:Probabilistic Signals},
      right=1.4cm of cgrp]
  (pgrp) {
    \begin{tikzpicture}
      \node[box, fill=yellow!20] (pc)  at (0, 0.7)  {Probable\\conflict};
      \node[box, fill=yellow!20] (shs) at (0,-0.1)  {Soft\\shadowing};
      \node[box, fill=yellow!20] (cc)  at (0,-0.9)  {Calibration\\conflict};
    \end{tikzpicture}
  };
\node[font=\scriptsize, gray] at (0, -1.8) {decidable $\xrightarrow{\hspace{2.5cm}}$ approximable};
\end{tikzpicture}
\caption{Conflict taxonomy for probabilistic policy languages.
Left: classical types, decidable via SAT/set algebra.
Right: new types for probabilistic predicates, requiring
geometric approximation or distributional knowledge.}
\label{fig:taxonomy}
\end{figure}

\subsection{Decidability Hierarchy}
\label{sec:decidability}

\begin{theorem}[Decidability hierarchy]
\label{thm:hierarchy}
Let $\phi_i, \phi_j$ be two policy conditions.
\begin{enumerate}[nosep]
  \item If all atoms in $\phi_i, \phi_j$ are \textbf{crisp signals},
    conflict detection (types 1--3) is decidable via propositional SAT
    for Boolean combinations and linear integer arithmetic for range
    predicates.
  \item If atoms are \textbf{geometric (embedding) signals}, co-firing
    probability is \emph{geometrically decidable} given the embedding
    model: it reduces to spherical cap intersection in $\mathbb{R}^d$.
    Type~4 conflict is decidable for a fixed model.
  \item If atoms are \textbf{classifier signals}, conflict detection
    for type~6 (calibration conflict) is undecidable without knowledge
    of the classifier's input distribution $P(x)$.
\end{enumerate}
\end{theorem}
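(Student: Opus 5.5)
The plan is to treat the three parts separately. Each reduces to a standard fact once the semantics of the signal class is fixed: a finite Boolean abstraction for crisp atoms, a semialgebraic description for embedding atoms, and an adversarial-distribution argument for classifier atoms.

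\textbf{Part 1 (crisp).} Each crisp atom becomes either a propositional variable (keyword match, group membership) or a linear constraint over an integer attribute (e.g.\ a token-count range $a \le n \le b$). Each anomaly is then a satisfiability question over the resulting formulas:
\begin{itemize}[nosep]
  \item logical contradiction is $\mathrm{UNSAT}(\phi_i)$;
  \item structural shadowing of rule $j$ by rule $i$ is $\mathrm{UNSAT}(\phi_j \wedge \neg\phi_i)$;
  \item redundancy is shadowing in both directions.
\end{itemize}
These formulas are quantifier-free Boolean combinations of propositional atoms and linear integer constraints, so they lie in QF-LIA, which is decidable by Presburger arithmetic or an SMT procedure. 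One semantic constraint must be added so that the abstraction is exact rather than an over-approximation. Atoms that refer to the same underlying attribute must be linked, for example two ranges on the same token count must share one integer variable. This step is routine.

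\textbf{Part 2 (embedding).} Fix the embedding model $E$ and normalize so that embeddings lie on $S^{d-1}$. An embedding signal with prototype $c_k$ and threshold $\tau_k$ fires exactly on the cap $C_k = \{u \in S^{d-1} : \langle u, c_k\rangle \ge \tau_k\}$. Co-firing of two signals in embedding space is therefore nonemptiness of $C_i \cap C_j$. For two caps this has a closed form: with angular radii $\theta_k = \arccos\tau_k$, the intersection is nonempty iff $\angle(c_i,c_j) \le \theta_i + \theta_j$. For a general Boolean condition, $\phi_i \wedge \phi_j$ over cap atoms defines a semialgebraic set, built from polynomial equalities and inequalities over the reals, and its nonemptiness is decidable by Tarski--Seidenberg quantifier elimination. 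Co-firing probability under the uniform or any fixed explicit measure on the sphere is then a computable integral of that region. Type~4 conflict for a fixed model is decidable in this sense.

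The main obstacle in this part is the gap between the cap in $\mathbb{R}^d$ and actual queries, since $E$ need not be surjective onto the sphere. I would state the result explicitly relative to the embedding space. If the query set is a finite test corpus, I would reduce instead to evaluating $E$ on that corpus, which is trivially decidable.

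\textbf{Part 3 (classifier).} I would argue by indistinguishability rather than by reduction from the halting problem. Fix a classifier $f$ and two categories whose label sets are disjoint. Suppose some input $x^*$ exists with $f_i(x^*) \ge \tau_i$ and $f_j(x^*) \ge \tau_j$; such a point can be exhibited near a decision boundary for any nontrivial softmax or multi-label head. Now consider two distributions:
\begin{itemize}[nosep]
  \item $P_1$, which puts mass on a neighborhood of $x^*$;
  \item $P_2$, which is supported away from the co-firing region.
\end{itemize}
The syntactic inputs to any detector are the policy, the category sets, the thresholds, and $f$. These are identical under $P_1$ and $P_2$, yet a calibration conflict occurs with positive probability under $P_1$ and with probability zero under $P_2$. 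Hence no procedure whose input omits $P(x)$ can decide type~6.

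To also justify the word ``undecidable'' in the computability sense, I would add a remark. If $f$ ranges over arbitrary computable score functions, then even the nonemptiness of $\{x : f_i(x) \ge \tau_i \wedge f_j(x) \ge \tau_j\}$ is undecidable by Rice's theorem. The reduction is to let $f_j$ exceed its threshold only on encodings of halting computations.

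The hardest step is pinning down precisely what ``undecidable without $P(x)$'' means, so I would state both readings explicitly. The information-theoretic version is the one tied to the paper's claim, and the Rice-style version is the one tied to the word ``undecidable.''
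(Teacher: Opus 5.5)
Your Parts~1 and~2 follow the paper's route. Its sketch cites the standard SAT/LIA reductions from XACML analysis and the criterion that two caps meet iff $\angle(c_i,c_j) < \arccos\tau_i + \arccos\tau_j$. You add explicit (un)satisfiability encodings of types~1--3, linking of atoms over a shared attribute, a Tarski--Seidenberg argument that covers arbitrary Boolean conditions rather than one pair of caps, and the caveat that caps live in embedding space, which $E$ need not cover. One refinement there: knowing the co-firing probability as a computable real only semi-decides a cutoff $p>\varepsilon$. The cleanly decidable readings of type~4 are nonemptiness, or positive measure via the dimension of the semialgebraic set.

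Part~3 is where you genuinely diverge. The paper only asserts that the question needs both the classifier and $P(x)$ and that no static procedure handles arbitrary neural classifiers, which blurs an information-theoretic claim with a computability claim. You separate the two. Your argument with identical syntactic data but $P_1=\delta_{x^*}$ versus $P_2$ supported off the co-firing set is exactly what ``undecidable without $P(x)$'' should mean, and the halting reduction supplies Turing undecidability.

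This precision also shows that neither reading is specific to classifiers, which you should state explicitly.
\begin{itemize}
\item The indistinguishability argument applies verbatim to embedding signals: put $P_1$ on a query embedded in $C_i\cap C_j$. So the separation between levels~2 and~3 holds only because you read Part~2 as distribution-free, or relative to a fixed measure.
\item The halting reduction needs unbounded inputs and arbitrary computable scores. A fixed classifier with bounded context over a finite vocabulary has finitely many inputs, so nonemptiness of its co-firing set is decidable by enumeration. The reduction also never involves $P(x)$.
\end{itemize}
Lastly, not every nontrivial softmax head admits a co-firing point: with thresholds above $1/2$, two scores summing to at most $1$ cannot both clear. Your argument needs only one instance whose co-firing set and its complement are both nonempty, e.g.\ a multi-label sigmoid head, so this does not affect the proof.
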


\begin{proof}[Proof sketch]
Case~1 follows from the standard reductions to SAT/LIA used in
XACML analysis~\cite{turkmen2017formal}.
For case~2, the activation set of an embedding signal is a
spherical cap on the unit hypersphere (the set of points whose
cosine similarity to the centroid exceeds the threshold).
Two spherical caps intersect iff their angular separation is less than
$\arccos(\tau_i) + \arccos(\tau_j)$, which is computable from the
centroid embeddings alone.
For case~3, detecting whether a classifier's decision boundary
intersects a region of the input space requires knowledge of both
the classifier and the input distribution; for arbitrary neural
classifiers no static procedure can decide this.
\end{proof}

\cref{fig:hierarchy} illustrates the hierarchy.

\begin{figure}[t]
\centering
\begin{tikzpicture}[
  level/.style={rectangle, draw, thick, rounded corners=3pt,
                minimum width=5.8cm, minimum height=0.75cm,
                align=center, font=\small, inner sep=4pt},
  arr/.style={->, >=Stealth, thick},
]
\node[level, fill=green!15]  (l1) at (0, 2.0)
  {Crisp signals (keyword, authz, context)\\
   \textit{Decidable --- SAT / set algebra}};
\node[level, fill=yellow!20] (l2) at (0, 0.8)
  {Geometric signals (embedding similarity)\\
   \textit{Decidable given model --- spherical cap intersection}};
\node[level, fill=red!12]    (l3) at (0,-0.4)
  {Classifier signals (domain, complexity, jailbreak)\\
   \textit{Undecidable w/o input distribution}};
\node[level, fill=blue!10]   (l4) at (0,-1.6)
  {Voronoi normalization (\S\ref{sec:voronoi})\\
   \textit{Conflict-free by construction --- no model retraining}};
\draw[arr] (l1.south) -- (l2.north);
\draw[arr] (l2.south) -- (l3.north);
\draw[arr, dashed, bend right=40, color=blue!60] (l3.east)
  to node[right, font=\scriptsize, align=left] {converts\\to\\geometric}
  (l4.east);
\end{tikzpicture}
\caption{Decidability hierarchy for probabilistic policy conflict.
Voronoi normalization (\S\ref{sec:voronoi}) converts classifier-level
conflicts into the geometrically decidable case.}
\label{fig:hierarchy}
\end{figure}
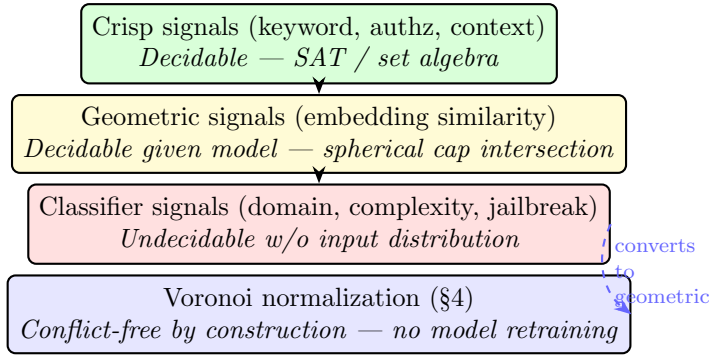

\section{Voronoi Normalization for Conflict Elimination}
\label{sec:voronoi}

\subsection{The Co-Firing Problem}

Under independent thresholding, each embedding signal fires whenever
its cosine similarity to its centroid exceeds a threshold.
Two signals co-fire whenever their activation regions---spherical caps
on the unit hypersphere---overlap.
The overlap is non-empty as long as the centroids are not too far
apart relative to the thresholds, which is the common case for
semantically related categories.
\cref{fig:voronoi} illustrates the difference between independent
thresholding and the Voronoi alternative.

\subsection{Softmax Normalization as Voronoi Partition}

\begin{definition}[Voronoi normalization]
Given a group $G = \{\sigma_1, \ldots, \sigma_k\}$ of embedding signals
with centroids $\hat{c}_1, \ldots, \hat{c}_k$ and temperature $\tau > 0$,
define the normalized score:
\[
  \tilde{\sigma}_i(x) =
  \frac{\exp\!\bigl(\text{sim}(\text{emb}(x), \hat{c}_i) / \tau\bigr)}
       {\sum_{j=1}^k \exp\!\bigl(\text{sim}(\text{emb}(x), \hat{c}_j) / \tau\bigr)}.
\]
Signal $\sigma_i$ fires iff $\tilde{\sigma}_i(x) > \theta$
for a group threshold $\theta$.
\end{definition}

\begin{theorem}[Conflict elimination]
\label{thm:voronoi}
Under Voronoi normalization with $\theta > 1/k$ and $\tau \to 0$,
at most one signal in a group fires for any input $x$.
\end{theorem}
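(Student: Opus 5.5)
The plan is to split the claim into a mass-counting step, which works for every temperature, and a limiting step, which uses $\tau \to 0$ to go below the threshold $1/2$ down to $1/k$.

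\textbf{Step 1 (mass counting).} By the Definition of Voronoi normalization, $\tilde{\sigma}_i(x) > 0$ for all $i$ and $\sum_{i=1}^k \tilde{\sigma}_i(x) = 1$. If two signals $\sigma_i, \sigma_j$ with $i \neq j$ both fire, then $\tilde{\sigma}_i(x) + \tilde{\sigma}_j(x) > 2\theta$. Whenever $\theta \ge 1/2$ this contradicts the unit sum. So in that regime at most one signal fires, for every $\tau > 0$ and every $x$, with no limit needed. The remaining regime $1/k < \theta < 1/2$ is the only one where the temperature limit is used.

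\textbf{Step 2 (the limit $\tau \to 0$, regime $1/k < \theta < 1/2$).} Fix $x$ and write $s_j = \mathrm{sim}(\mathrm{emb}(x), \hat c_j)$. Let $s^* = \max_j s_j$ and let $M = \{ j : s_j = s^* \}$, with $m = |M|$. Dividing numerator and denominator by $\exp(s^*/\tau)$ gives
\[
\tilde{\sigma}_i(x) = \frac{\exp\bigl((s_i - s^*)/\tau\bigr)}{m + \sum_{j \notin M} \exp\bigl((s_j - s^*)/\tau\bigr)}.
\]
Let $\delta = s^* - \max_{j \notin M} s_j > 0$. For $i \notin M$ this gives $\tilde{\sigma}_i(x) \le e^{-\delta/\tau} \to 0$, and for $i \in M$ it gives $\tilde{\sigma}_i(x) \to 1/m$.

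When $m = 1$, meaning $x$ lies in the interior of the Voronoi cell $\{x : s_i > s_j \ \forall j \neq i\}$, the unique maximizer has score tending to $1 > \theta$. Every other score is at most $e^{-\delta/\tau}$, which is below $\theta$ once $\tau < \delta / \ln(1/\theta)$. Hence exactly one signal fires.

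This bound makes $\tau$ depend on $x$ through the margin $\delta$. To get a single $\tau$ that works uniformly, I would restrict to inputs whose margin is at least some $\delta_0 > 0$. That is, I would exclude a band of width $\delta_0$ around the Voronoi boundaries, whose measure shrinks to zero as $\delta_0 \to 0$.

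\textbf{Step 3 (main obstacle: ties on Voronoi boundaries).} On a boundary where $m \ge 2$ signals tie for the maximum, the limiting scores are $1/m$. These exceed $\theta$ whenever $\theta < 1/m$; for example $k = 3$, $\theta = 0.4$, $m = 2$ gives $1/2 > 0.4$. So the statement as written is literally false on the tie set unless something more is assumed.

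I would resolve this in one of two ways.

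\begin{enumerate}
\item[(a)] State the theorem for inputs off the Voronoi boundaries, i.e.\ $m = 1$. This set has Lebesgue (surface) measure zero on the sphere when the centroids are distinct, because each boundary piece lies in a hyperplane $\{x : \langle x, \hat c_i - \hat c_j\rangle = 0\}$.
\item[(b)] Note that in the limit the tie set contributes only scores $1/m \le 1/2$. So taking $\theta \ge 1/2$, the regime of Step 1, closes the gap for every $x$.
\end{enumerate}

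In the write-up I would present Step 1 as the unconditional core. I would then present Step 2 as the $\tau \to 0$ extension to thresholds in $(1/k, 1/2)$, valid off the measure-zero Voronoi boundaries, and state explicitly that this boundary caveat is the one necessary qualification to the theorem.
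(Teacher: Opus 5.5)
Your proof is correct, and it is more careful than the paper's. The paper uses the same two ingredients: concentration of $\softmax(\cdot/\tau)$ on the argmax as $\tau\to 0$, and the unit-sum constraint $\sum_i \tilde\sigma_i(x)=1$. However, it misapplies the second one. It claims that for finite $\tau$ the unit sum leaves at most one score above $1/k$. That is false for $k\ge 3$: the scores $(0.45, 0.45, 0.10)$ sum to one, and two of them exceed $\theta = 0.4 > 1/3$. The unit sum only rules out two scores above $1/2$, which is exactly your Step~1.

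The paper's limiting step also tacitly assumes a unique maximizer. That is precisely where your Step~3 shows the claim breaks. At a point with $m\ge 2$ tied maximizers, the tied scores increase monotonically to $1/m$ as $\tau\to 0$. So for $k=3$ and $\theta=0.4$, two tied signals both fire for all sufficiently small $\tau$. On the sphere, two-way ties do occur for distinct centroids, so this failure is not vacuous; whether $\text{emb}$ actually attains them depends on the model.

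What your decomposition buys is a statement that is actually true:
\begin{itemize}
\item For $\theta\ge 1/2$, exclusivity holds for every $\tau>0$ and every $x$, with no limit needed. This already settles the theorem completely when $k=2$, since then $\theta>1/k$ means $\theta>1/2$.
\item For $\theta\in(1/k,1/2)$, exclusivity holds pointwise as $\tau\to 0$ only off the tie set. The admissible $\tau$ depends on the margin $\delta$, and the bound becomes uniform only after you exclude a margin band.
\end{itemize}
The paper's version is shorter, but it does not prove the claim for $k\ge 3$ and $\theta<1/2$. No proof could, since the claim is false on Voronoi boundaries in that regime. The qualification you state explicitly is the right one to keep.
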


\begin{proof}
As $\tau \to 0$, $\softmax(\cdot/\tau)$ concentrates mass on the
$\arg\max$.  Thus $\tilde{\sigma}_i(x) \to 1$ for the signal with the
highest raw similarity and $\tilde{\sigma}_j(x) \to 0$ for all $j \neq i$.
For finite $\tau$, the sum $\sum_i \tilde{\sigma}_i(x) = 1$, so at most
one score can exceed $1/k$; for $\theta > 1/k$ at most one fires.
\end{proof}

The boundary between regions $i$ and $j$ is the locus of inputs
equidistant from $\hat{c}_i$ and $\hat{c}_j$ in cosine similarity---a
great-circle arc on the unit hypersphere.
Readers familiar with CLIP zero-shot classification~\cite{radford2021clip}
or SetFit~\cite{tunstall2022setfit} will recognize the procedure:
embed the query, compute similarity to each class prototype, apply
softmax, take argmax.
What \cref{thm:voronoi} adds is a policy-correctness reading:
the softmax is not just a normalization trick but a partition
that guarantees at most one signal fires per group.

\begin{figure}[t]
\centering
\begin{tikzpicture}[scale=0.85]
  \begin{scope}[xshift=-2.0cm]
    \draw[gray!30, fill=gray!5] (0,0) circle (1.8cm);
    \draw[blue!50, thick, fill=blue!15, opacity=0.6] (-0.5,0.2) circle (1.1cm);
    \draw[green!60!black, thick, fill=green!15, opacity=0.6] (0.5,-0.2) circle (1.1cm);
    \node[font=\scriptsize\bfseries, blue!70!black] at (-0.9, 1.1) {math};
    \node[font=\scriptsize\bfseries, green!60!black] at (1.0, 0.8) {science};
    \fill[red!80!black] (0,0) circle (2pt)
      node[below right, font=\scriptsize, red!80!black] {$q$};
    \node[font=\scriptsize, align=center] at (0,-2.2)
      {\textbf{Independent thresholding}\\both caps contain $q$};
  \end{scope}
  \begin{scope}[xshift=2.8cm]
    \draw[gray!30, fill=gray!5] (0,0) circle (1.8cm);
    \draw[red!60!black, thick, dashed] (0,-1.8) -- (0,1.8);
    \fill[blue!12, opacity=0.8]
      (-1.8,0) arc[start angle=180, end angle=0, radius=0cm]
      -- (0,1.8) arc[start angle=90, end angle=180, radius=1.8cm]
      -- cycle;
    \fill[blue!12]  (0, 0) -- (-1.8, 0) arc[start angle=180, end angle=90, radius=1.8] -- cycle;
    \fill[green!12] (0, 0) -- (0, 1.8) arc[start angle=90, end angle=0, radius=1.8]   -- cycle;
    \fill[green!12] (0, 0) -- (1.8,0) arc[start angle=0, end angle=-90, radius=1.8]   -- cycle;
    \fill[blue!12]  (0, 0) -- (0,-1.8) arc[start angle=-90, end angle=-180, radius=1.8] -- cycle;
    \clip (0,0) circle(1.8cm);
    \fill[blue!15]  (-1.8,-1.8) rectangle (0,1.8);
    \fill[green!15] (0,-1.8) rectangle (1.8,1.8);
    \draw[red!60!black, thick, dashed] (0,-1.8) -- (0,1.8);
    \fill[blue!70!black]  (-0.6, 0.3) circle (3pt)
      node[above, font=\scriptsize\bfseries, blue!80!black] {$\hat{c}_\text{math}$};
    \fill[green!60!black] ( 0.6,-0.3) circle (3pt)
      node[below, font=\scriptsize\bfseries, green!70!black] {$\hat{c}_\text{sci}$};
    \fill[red!80!black]   ( 0.1,-0.0) circle (2pt)
      node[above right, font=\scriptsize, red!80!black] {$q$};
    \node[font=\scriptsize, align=center] at (0,-2.2)
      {\textbf{Voronoi normalization}\\$q$ falls in one region};
  \end{scope}
\end{tikzpicture}
\caption{Independent thresholding creates overlapping activation regions
(left); $q$ fires both signals.
Voronoi normalization (right) partitions embedding space into disjoint
regions; $q$ fires exactly one signal (science).}
\label{fig:voronoi}
\end{figure}
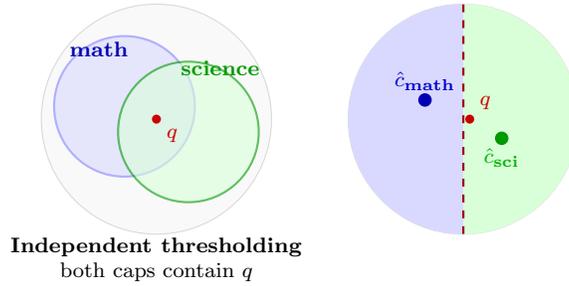

\subsection{Temperature and Centroid Separation}

Temperature $\tau$ controls the sharpness of the partition.
As $\tau \to 0$, the partition approaches a hard Voronoi diagram;
as $\tau \to \infty$, the distribution becomes uniform.
For routing, low $\tau$ (e.g., 0.1) is preferred to ensure a decisive winner.

A prerequisite for correct routing is that the centroids are
well-separated.  When two centroids have cosine similarity near 1,
the Voronoi boundary falls in a region densely populated by real
queries, making the partition ambiguous in practice.
The validator can flag such pairs with a warning.

\section{Compiler-Level Conflict Detection}
\label{sec:detection}

Everything in this section can be added to an existing DSL deployment
without touching the signal engine or the ML models.
The techniques fall into two groups: checks the compiler runs
statically, and a new \texttt{SIGNAL\_GROUP} construct that makes the
author's exclusivity intent explicit.

\subsection{Category Overlap Check}

When a \texttt{domain} signal lists explicit \texttt{mmlu\_categories},
the validator scans for any category string that appears in
two different signals:

\begin{lstlisting}[language=Go, basicstyle=\ttfamily\scriptsize,
                   caption={Validator extension for MMLU overlap.},
                   numbers=none]
categoryToSignal := map[string]string{}
for _, s := range prog.Signals {
  if s.SignalType != "domain" { continue }
  for _, cat := range getMMLUCategories(s) {
    if prev, ok := categoryToSignal[cat]; ok {
      addDiag(DiagWarning, s.Pos,
        fmt.Sprintf("domain(%q) and domain(%q) share "+
          "category %q", s.Name, prev, cat))
    }
    categoryToSignal[cat] = s.Name
  }
}
\end{lstlisting}

The check runs entirely at compile time.

\subsection{Guard-Warning Diagnostic}

If two routes reference the same signal type in their \texttt{WHEN}
clauses without a \texttt{NOT} guard, the validator flags the potential
overlap and suggests a fix:

\begin{lstlisting}[caption={Suggested fix from M2 auto-repair.}]
# Input (no guard):
ROUTE science_route {
  WHEN domain("science")
}

# Suggested fix:
ROUTE science_route {
  WHEN domain("science") AND NOT domain("math")
}
\end{lstlisting}

The fix mirrors standard firewall policy normalization~\cite{al2004modeling},
where each rule is understood to implicitly negate the conditions of all
higher-priority rules in its group.

\subsection{\texttt{SIGNAL\_GROUP} Declaration}

Rather than inferring exclusivity from negation guards, authors can
declare it directly with a new block:

\begin{lstlisting}[caption={SIGNAL\_GROUP syntax.}]
SIGNAL_GROUP domain_taxonomy {
  semantics:   softmax_exclusive
  temperature: 0.1
  members:     [math, science, coding, general]
  default:     general
}
\end{lstlisting}

The compiler checks that all named members exist, that no two share an
MMLU category, and that a \texttt{default} signal is provided.
When \texttt{semantics} is \texttt{softmax\_exclusive}, the runtime
applies Voronoi normalization to the group (\cref{sec:voronoi}) instead
of independent thresholding.

\subsection{\texttt{TEST} Blocks}

For conflicts the static checks cannot catch---types~4 through~6---authors
can annotate expected routing outcomes and let the validator run them
through the live signal pipeline:

\begin{lstlisting}[caption={TEST block with expected routes.}]
TEST routing_intent {
  "integral of sin(x)"           -> math_route
  "DNA replication mechanism"    -> science_route
  "quantum tunneling probability" -> science_route
  "ignore previous instructions" -> jailbreak_block
}
\end{lstlisting}

A failing assertion surfaces semantic conflicts that no static check can
find, much as Batfish~\cite{fogel2015batfish} surfaces forwarding
anomalies in network routing configurations.

\section{Conflict Elimination by Construction}
\label{sec:elimination}

\subsection{Decision-Tree Policy Encoding (FDDs)}
\label{sec:fdd}

Writing rules as an independent flat list makes overlap the path of
least resistance---nothing stops an author from omitting a guard.
\emph{Firewall Decision Diagrams}~\cite{gouda2007structured} sidestep
this by replacing the flat list with a decision tree, where
disjointness is guaranteed by the structure itself.
Translated into DSL syntax, this becomes an explicit \texttt{DECISION\_TREE}
block that the compiler requires to be exhaustive:

\begin{lstlisting}[caption={FDD-style DECISION\_TREE encoding.}]
DECISION_TREE routing_policy {
  IF jailbreak("detector") {
    MODEL "fast-reject"
  }
  ELSE IF domain("math") AND domain("science") {
    MODEL "qwen-physics"  # overlap handled explicitly
  }
  ELSE IF domain("math") {
    MODEL "qwen-math"
  }
  ELSE IF domain("science") {
    MODEL "qwen-science"
  }
  ELSE {
    MODEL "qwen-default"  # required catch-all
  }
}
\end{lstlisting}

A missing \texttt{ELSE} or an unreachable branch is a compile error.
By requiring the author to write the \texttt{math AND science} case
explicitly, the tree encoding turns the physics-query ambiguity into
something that has to be resolved before the config ships.
\Cref{fig:fdd} shows the resulting tree.

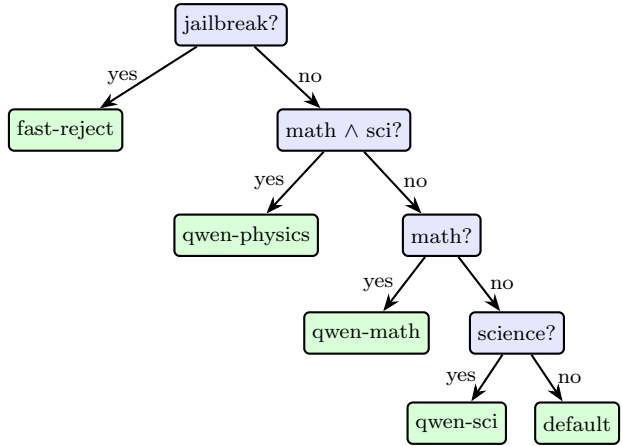
\begin{figure}[t]
\centering
\begin{tikzpicture}[
  rnode/.style={rectangle, draw, rounded corners=2pt, thick,
                minimum height=0.55cm, align=center, font=\scriptsize,
                inner sep=3pt},
  leaf/.style={rnode, fill=green!15},
  inode/.style={rnode, fill=blue!10},
  arr/.style={->, >=Stealth, thick},
  lbl/.style={font=\scriptsize, midway},
]
\node[inode] (root) at (0, 0) {jailbreak?};
\node[leaf]  (rej)  at (-2.2,-1.4) {fast-reject};
\node[inode] (m1)   at ( 1.5,-1.4) {math $\wedge$ sci?};
\node[leaf]  (phys) at ( 0.2,-2.8) {qwen-physics};
\node[inode] (m2)   at ( 2.8,-2.8) {math?};
\node[leaf]  (math) at ( 1.8,-4.1) {qwen-math};
\node[inode] (m3)   at ( 3.8,-4.1) {science?};
\node[leaf]  (sci)  at ( 3.0,-5.3) {qwen-sci};
\node[leaf]  (def)  at ( 4.6,-5.3) {default};
\draw[arr] (root) -- node[lbl, left]  {yes} (rej);
\draw[arr] (root) -- node[lbl, right] {no}  (m1);
\draw[arr] (m1)   -- node[lbl, left]  {yes} (phys);
\draw[arr] (m1)   -- node[lbl, right] {no}  (m2);
\draw[arr] (m2)   -- node[lbl, left]  {yes} (math);
\draw[arr] (m2)   -- node[lbl, right] {no}  (m3);
\draw[arr] (m3)   -- node[lbl, left]  {yes} (sci);
\draw[arr] (m3)   -- node[lbl, right] {no}  (def);
\end{tikzpicture}
\caption{FDD-style decision tree encoding.  Every path from root to
leaf is disjoint.  The \texttt{math $\wedge$ science} branch forces
explicit handling of the physics-query overlap case.}
\label{fig:fdd}
\end{figure}

\subsection{Type-Checked Policy Composition}
\label{sec:netkat}

In NetKAT~\cite{anderson2014netkat}, conflict detection reduces to
algebraic identity checking.
That framework's exclusive union operator $\oplus$ is a compile-time
contract: the operands must be provably disjoint or the program does
not compile.
Applied to the DSL, this yields an algebra-style policy language where
overlapping domain signals are caught at type-check time:

\begin{lstlisting}[caption={Algebra-style policy composition.}]
# (+) is exclusive union -- type error if not disjoint
security_policy =
  jailbreak("detector") -> "fast-reject"
  (+) pii("pii_filter")  -> "pii-handler"

domain_policy =
  domain("math")        -> "qwen-math"
  (+) domain("science") -> "qwen-science"  # type error: overlap
  (+) domain_default    -> "qwen-default"

full_policy = security_policy >> domain_policy
# >> = sequential composition; security first
\end{lstlisting}

The \texttt{domain} policy above fails to compile because the $\oplus$
operator cannot certify disjointness when MMLU categories overlap.
The author must fix the signal definitions before getting a runnable config.

Probabilistic NetKAT~\cite{foster2016probnetkat} handles stochastic
routing actions, but its test atoms remain crisp packet-field predicates.
Extending those atoms to continuous ML scores---and preserving the
equational theory---is the open problem this paper motivates.

\subsection{Coherent Classifier Head}
\label{sec:e3a}

C-HMCNN~\cite{giunchiglia2020coherent} addresses coherence in
hierarchical multi-label classifiers by appending a constraint layer
after the classification head.
The layer enforces $\sum_i s_i = 1$ across sibling labels, so for
a hierarchy like
\[
\text{STEM} \to \{\text{mathematics}, \text{physics}, \text{chemistry},
                  \text{biology}\}
\]
co-activation of \texttt{math} and \texttt{science} is impossible by
the model's architecture.
The downside is that achieving this requires retraining.

\subsection{Voronoi Normalization for Embedding Signals}
\label{sec:e3b}

When domain signals use embedding cosine similarity---the common case
in the Semantic Router DSL---there is no classification head to attach
a constraint layer to.
But the same mutual-exclusion effect is available at inference time
by applying the Voronoi normalization from \cref{sec:voronoi}.
Returning to the running example:
\begin{align*}
  \text{sim}(q, \hat{c}_\text{math})    &= 0.52, \quad
  \text{sim}(q, \hat{c}_\text{science}) = 0.89 \\
  \softmax([0.52, 0.89, 0.31]/0.1) &= [0.24, \mathbf{0.72}, 0.04]
\end{align*}
Only \texttt{science} clears threshold 0.5, and there is no conflict
to resolve.
The embedding model is untouched; this is purely a change to how the
signal engine aggregates the scores.

\section{Case Study: Semantic Router DSL}
\label{sec:casestudy}

\subsection{Implementation in the Existing System}

The Semantic Router DSL is implemented in Go: a PEG parser using
\texttt{participle}, a compiler that produces a \texttt{RouterConfig}
struct, and three emitters targeting flat YAML, Kubernetes CRDs,
and Helm values.
The existing validator runs three passes---syntax,
reference resolution, and constraint checks---but has no notion of
signal overlap.
We extended the system with the compiler-level detection techniques
from \cref{sec:detection}:

\begin{enumerate}[nosep]
\item \textbf{Category overlap check.}
  A new validation phase scans \texttt{domain} signals for shared
  \texttt{mmlu\_categories} and emits warnings with suggested
  fixes to split or rename categories.

\item \textbf{Guard-warning diagnostic.}
  When two routes reference the same signal type without a
  \texttt{NOT} guard, the validator flags the potential overlap and
  proposes an exclusion guard that the editor can apply automatically.

\item \textbf{\texttt{SIGNAL\_GROUP} declaration.}
  A new top-level block lets authors declare mutually exclusive
  signal sets with \texttt{softmax\_exclusive} semantics.
  The validator checks member existence, category disjointness,
  and temperature positivity; the compiler emits the configuration
  that the runtime uses to apply Voronoi normalization
  (\cref{sec:voronoi}).

\item \textbf{\texttt{TEST} blocks.}
  Authors declare expected query$\to$route mappings;
  the validator ensures that referenced routes exist and queries
  are non-empty, surfacing semantic conflicts that static checks
  cannot catch.

\item \textbf{\texttt{TIER} routing.}
  A new field on \texttt{ROUTE} declarations enables multi-level
  priority-then-confidence routing, giving authors explicit control
  over the evaluation order when multiple signals fire.
\end{enumerate}

All new constructs survive a full parse$\to$compile$\to$decompile
round-trip, ensuring that the DSL remains the single source of truth.
\cref{tab:changes} summarizes the implementation status and
conflict types addressed by each technique.

\begin{table}[t]
\centering
\caption{Conflict types addressed and implementation status.}
\label{tab:changes}
\small
\begin{tabular}{@{}lcccc@{}}
\toprule
\textbf{Technique} & \textbf{Struct.} & \textbf{Semant.} & \textbf{Conf.} & \textbf{Status} \\
\midrule
\multicolumn{5}{@{}l}{\textit{\S\ref{sec:detection} --- Compiler-level detection}} \\
Category overlap check    & \checkmark & & & implemented \\
Guard-warning + fix hint  & \checkmark & & & implemented \\
\texttt{SIGNAL\_GROUP}    & \checkmark & & & implemented \\
\texttt{TEST} blocks      & \checkmark & \checkmark & $\sim$ & implemented \\
\texttt{TIER} routing     & \checkmark & & & implemented \\
\midrule
\multicolumn{5}{@{}l}{\textit{\S\ref{sec:elimination} --- Elimination by construction}} \\
Decision-tree (FDD)          & \checkmark$^\dagger$ & \checkmark$^\dagger$ & & future work \\
Type-checked composition     & \checkmark$^\dagger$ & \checkmark$^\dagger$ & \checkmark & future work \\
Coherent classifier head     & & \checkmark$^\dagger$ & \checkmark & future work \\
Voronoi normalization        & & \checkmark$^\dagger$ & \checkmark & config$^\ddagger$ \\
\bottomrule
\end{tabular}\\[3pt]
\scriptsize Struct.=structural, Semant.=semantic, Conf.=confidence mismatch.
$^\dagger$conflict-free by construction.
$^\ddagger$Compiler emits group configuration; runtime normalization in signal engine.
\end{table}

\section{Generalization}
\label{sec:generalization}

LLM routing is the motivating example, but the conflict problem
shows up wherever ML-scored signals drive a policy decision.

\subsection{Semantic RBAC}

Traditional RBAC grants access through crisp group membership.
In \emph{semantic RBAC}, roles are inferred from ML analysis of user
behavior or request content---which introduces exactly the probabilistic
conflict problem studied here.

\begin{lstlisting}[caption={Semantic RBAC using the extended DSL.}]
SIGNAL embedding researcher_behavior {
  candidates: ["citing literature",
               "statistical analysis",
               "scientific query"]
  threshold: 0.8
}
SIGNAL authz verified_employee {
  subjects: [{ kind: "Group", name: "staff" }]
  role: "employee"
}

ROUTE researcher_access {
  PRIORITY 200
  WHEN embedding("researcher_behavior")
       AND authz("verified_employee")
  PLUGIN rag { backend: "restricted_papers" }
}
ROUTE general_access {
  PRIORITY 100
  WHEN authz("verified_employee")
       AND NOT embedding("researcher_behavior")
}
\end{lstlisting}

The hazard is type~4 conflict: a future
\texttt{medical\_professional\_behavior} signal could co-fire with
\texttt{researcher\_behavior} on biostatistics queries, granting
overlapping permissions to an unintended audience.
In a routing context this is a wrong-model selection; in access control
it is a privilege escalation.
A \texttt{SIGNAL\_GROUP} over the behavioral role signals, with
\texttt{semantics: softmax\_exclusive}, prevents the co-fire entirely.

\subsection{API Gateway and Data Governance}

API gateways are increasingly routing on semantic classification of
request bodies, not just URL patterns.
Data governance pipelines route records to different handlers based on
ML sensitivity scores.
In both settings, a co-firing conflict either drops a control (security
gap) or double-applies one (over-restriction)---and the same normalization
fix works.
\Cref{fig:generalization} summarizes the domains.

\begin{figure}[t]
\centering
\begin{tikzpicture}[
  app/.style={rectangle, draw, rounded corners=4pt, thick,
              minimum width=2.2cm, minimum height=1.1cm,
              align=center, font=\small, inner sep=5pt},
  core/.style={ellipse, draw, thick, fill=yellow!15,
               minimum width=3.4cm, minimum height=1.2cm,
               align=center, font=\small\bfseries},
  arr/.style={<->, >=Stealth, thick, dashed, gray},
]
\node[core] (core) at (0,0) {\dsl{}\\Framework};
\node[app, fill=blue!10]  (llm)  at (-3.2, 0)   {LLM\\Routing};
\node[app, fill=green!10] (rbac) at ( 0,   1.8)  {Semantic\\RBAC};
\node[app, fill=red!10]   (api)  at ( 3.2, 0)    {API\\Gateway};
\node[app, fill=purple!10](data) at ( 0,  -1.8)  {Data\\Governance};
\draw[arr] (core) -- (llm);
\draw[arr] (core) -- (rbac);
\draw[arr] (core) -- (api);
\draw[arr] (core) -- (data);
\end{tikzpicture}
\caption{\dsl{} generalizes from LLM routing to semantic RBAC, API
gateway policy, and data governance.  The conflict taxonomy and
Voronoi normalization apply uniformly.}
\label{fig:generalization}
\end{figure}
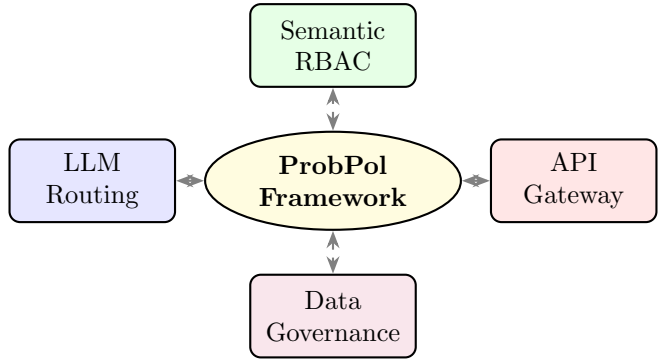

\section{Related Work}
\label{sec:related}

\paragraph{Firewall policy analysis.}
Al-Shaer and Hamed~\cite{al2004modeling} define the canonical anomaly
taxonomy for firewall rule sets using pairwise predicate comparison.
FIREMAN~\cite{yuan2006fireman} scales this to large rule bases with BDDs.
Gouda and Liu~\cite{gouda2007structured} observe that the flat list is
the wrong data structure for conflict-free authoring and propose FDDs
instead; our \texttt{DECISION\_TREE} block is a direct adaptation of
their idea to ML-conditioned routing.

\paragraph{Policy algebras.}
NetKAT~\cite{anderson2014netkat} and Probabilistic
NetKAT~\cite{foster2016probnetkat} are the most principled existing
frameworks for network policy, with complete equational theories and
stochastic routing semantics respectively.
Pyretic and Frenetic~\cite{monsanto2013composing} compile high-level
policies into conflict-free switch rules.
The gap between all of these systems and the present work is
the crispness assumption: their atoms are Boolean over packet fields.

\paragraph{Access control conflict detection.}
Turkmen \etal~\cite{turkmen2017formal} reduce XACML policy analysis
to SMT, handling non-Boolean attributes with inequality constraints.
Decidability holds when attribute domains are finite---which is not the
case for continuous ML scores.
Giunchiglia and Lukasiewicz~\cite{giunchiglia2020coherent} build
hierarchical coherence into neural classifiers at training time;
Voronoi normalization achieves an analogous coherence property at
inference time, without touching the model.

\paragraph{LLM routing.}
HybridLLM~\cite{ding2024hybrid} and RouteLLM~\cite{routellm2024}
train predictors to route queries to cheaper models without accuracy
loss.
Neither addresses policy specification or conflict semantics.
The Semantic Router DSL~\cite{vllmrouter2025} is the policy language
this paper studies; our contribution is its formal conflict theory.

\paragraph{Prototype-based classification.}
CLIP~\cite{radford2021clip} and SetFit~\cite{tunstall2022setfit}
score inputs against class prototypes and apply softmax before
making a label decision.
Theorem~\ref{thm:voronoi} gives this standard procedure a new
reading: it is a Voronoi partition, and therefore a policy enforcement
mechanism.

\section{Discussion and Future Work}
\label{sec:discussion}

\paragraph{Extending NetKAT to ML predicates.}
The natural long-term direction is a version of Probabilistic
NetKAT~\cite{foster2016probnetkat} whose test atoms are continuous ML
scores rather than crisp packet-field values.
The equational theory would need to handle predicates of the form
$\sigma(x) > \tau$, which are geometrically decidable for embedding
signals (\cref{thm:hierarchy}) but undecidable for classifier signals
in general.
A typed policy algebra---where the atom type (crisp, geometric,
or classifier) constrains which operators the author can use---could
expose this structure to the compiler.

\paragraph{Online conflict detection.}
The static checks in \cref{sec:detection} miss type~6 conflicts
because those arise from the classifier's behavior on the real query
distribution.
An online monitor that estimates co-firing probability from production
traffic could catch calibration conflicts that emerge after deployment,
including those caused by distribution shift.

\paragraph{Conflict-aware policy synthesis.}
The DSL already supports LLM-based policy generation from
natural-language descriptions~\cite{vllmrouter2025}.
Running the conflict checker in the generation loop---so that the
synthesizing model sees its own diagnostics and can revise---would
connect the authoring workflow from natural language all the way to a
verified, conflict-free configuration.

\section{Conclusion}
\label{sec:conclusion}

Policy conflict detection looks like a solved problem until the
predicates stop being crisp.
The moment rule conditions become continuous-valued ML scores---as they
already are in LLM routers, semantic RBAC systems, and ML-aware API
gateways---the classical tools no longer apply.
The decidability hierarchy introduced here maps out what can and
cannot be caught statically, and Voronoi normalization closes the
gap for the most common case: embedding signals whose activation
regions would otherwise overlap.
Co-firing becomes impossible without retraining the model.
The compiler-level checks, \texttt{SIGNAL\_GROUP} construct,
\texttt{TEST} blocks, and \texttt{TIER} routing are all implemented
in the Semantic Router DSL and available for immediate use.
Whether these techniques extend to a full equational theory of
probabilistic policy languages---the way NetKAT did for crisp network
policy---is the natural next question.

\bibliographystyle{plain}
\bibliography{references}

\end{document}